\nonstopmode\numberwithin{equation}{section}
\newtheorem{theorem}{Theorem}[section]
\newtheorem{lemma}{Lemma}[section]
\newcommand{\be}{\begin{equation}}
\newcommand{\ee}{\end{equation}}
\newcommand{\R}{\mathbb{ R}}
\begin{document}
\title{  On the overfly algorithm in  deep learning of neural networks     }

\author{
Alexei Tsygvintsev
}
\address{
U.M.P.A, Ecole Normale Sup\'{e}rieure de Lyon\\
46, all\'{e}e d'Italie, F69364 Lyon Cedex 07
}
\email{atsygvin@umpa.ens-lyon.fr}

\bigskip
\begin{abstract}
In  this  paper  we  investigate  the supervised backpropagation   training  of multilayer  neural networks  from  a  dynamical  systems point of view.  We discuss  some links with the qualitative  theory of 
differential equations  and  introduce   the overfly algorithm   to tackle  the local minima problem.
Our approach is   based on the existence of  first integrals of the generalised gradient system with build--in dissipation.

    \end{abstract}.

\keywords{deep learning, neural networks, dynamical systems, gradient descent}
\maketitle
\pagestyle{myheadings}

\section{Introduction. The dynamics of gradient flow. Neural networks and backpropagation.}
  Let $ F \, : \,  U \to \R$ be a smooth function in some open domain $U\subset \R^n$.   We equip  $U$ with the topology induced by the standard  Euclidean norm $||\cdot||$   defined by the canonical scalar product 
  $<x,y>=\sum x_i y_i$.
 The gradient vector field  defined  in $U$ by $F$  is  given by $V(x)=-\nabla F= -(\frac{\partial F}{\partial x_1}, \dots, \frac{\partial F}{\partial x_n})^T$, where $x=(x_1,\dots,x_n)^T$ are canonical coordinates in $U$.
 The critical points of $F$ are the solutions of $V(x)=0$, $x\in U$. Let $K$ be the set of all critical points of $F$ in $U$ (which can be unbounded and/or contain  non--isolated points). 
 
  The following theorem   \cite{Hirsch}, \cite{Wiggins}  is a classical result describing the asymptotic behaviour of solutions of the  gradient differential  system: 
 \begin{equation} \label{1}
 x'=V(x), \quad x\in U\,.
 \end{equation}
 \begin{theorem}
 Let $x_0\in U$ be the initial condition of \eqref{1}. Then every  solution $t\mapsto x(t)$, $x(0)=x_0$ either leaves all compact subsets of $U$   or approaches as $t \to +\infty $  the critical set  $K$ i.e
 \begin{equation}
  \lim_{t\to +\infty}\,  \inf_{y \in K} \, || x(t)-y||=0\,.
\end{equation}
\end{theorem}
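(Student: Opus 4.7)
The plan is to use $F$ itself as a Lyapunov function for \eqref{1}. Along any integral curve $x(t)$, the chain rule yields
$$\frac{d}{dt} F(x(t)) = \langle \nabla F(x(t)), V(x(t)) \rangle = -||V(x(t))||^2 \le 0,$$
so $F \circ x$ is monotonically non-increasing in $t$. This single identity drives the whole argument.

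I would then split into two cases. If the forward orbit $\gamma^+ = \{x(t) : t \ge 0\}$ is not contained in any compact subset of $U$, the first alternative of the conclusion holds and there is nothing to do. Otherwise $\overline{\gamma^+}$ is a compact subset of $U$; standard ODE prolongation then ensures the solution is defined for all $t \ge 0$, and the $\omega$-limit set $\omega(x_0) = \bigcap_{T \ge 0} \overline{\{x(t) : t \ge T\}}$ is nonempty, compact, and invariant under the flow of \eqref{1}.

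Next I would run a LaSalle--style invariance argument. Monotonicity combined with boundedness of $F$ on $\overline{\gamma^+}$ forces $F(x(t)) \to F_\infty$ for some finite $F_\infty$ as $t \to +\infty$. Continuity gives $F \equiv F_\infty$ on $\omega(x_0)$. For any $y \in \omega(x_0)$, invariance means the entire trajectory through $y$ stays inside $\omega(x_0)$, so $F$ is constant along it; differentiating and using the Lyapunov identity above forces $V(y)=0$, that is, $y \in K$. Hence $\omega(x_0) \subset K$, and the defining property of the $\omega$-limit set gives
$$\inf_{y \in K} ||x(t) - y|| \le \mathrm{dist}(x(t), \omega(x_0)) \to 0 \quad (t \to +\infty),$$
which is exactly the second alternative.

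The step that needs most care, modest as it is, is the dichotomy itself: one must verify that in the non-escape case the closure of the forward orbit really is a compact subset of $U$ (rather than merely that $x(t_n)$ is bounded along some sequence approaching $\partial U$ in the limit), so that the $\omega$-limit set sits inside $U$ and the flow acts on it. Once that point is pinned down, everything else is the standard gradient-flow/LaSalle template and no calculation beyond the Lyapunov identity above is required.
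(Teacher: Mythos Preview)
Your argument is correct and is precisely the standard LaSalle/Lyapunov proof one finds in the references the paper cites. Note, however, that the paper does not supply its own proof of this theorem: it is stated as a classical result with attribution to Hirsch--Smale and Wiggins, so there is no in-paper argument to compare against. Your write-up would serve perfectly well as the missing proof, and your flag about the dichotomy (ensuring $\overline{\gamma^+}\subset U$ is compact rather than merely that some subsequence stays bounded) is exactly the point that the textbook treatments handle with a short extension-of-solutions argument.
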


In particular, at regular points,  the trajectories of \eqref{1} cross the level surfaces of $F$  orthogonally  and isolated minima of $F$ (which is a {\it Lyapunov} function \cite{Salle} ) are asymptotically equilibrium points.

Under the additional  analyticity condition the  above convergence result can be made stronger:
\begin{theorem}(Absila,Kurdyka, \cite{Absila})
 Let $F$ be real analytic in $U$. Then  $y\in K$ is a local minimum of $F$ iff it is asymptotically stable equilibrium point of \eqref{1}.
\end{theorem}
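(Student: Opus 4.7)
The plan is to prove the two implications separately; only one of them genuinely needs the analyticity hypothesis, and that is where the real work lies.

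For the reverse direction, suppose $y$ is an asymptotically stable equilibrium of \eqref{1}. Along any solution $t\mapsto x(t)$ one has $\frac{d}{dt}F(x(t))=-\|\nabla F(x(t))\|^2\le 0$, so $F$ is a Lyapunov function. By asymptotic stability, there is a neighbourhood $W$ of $y$ such that every trajectory starting in $W$ converges to $y$; hence
\begin{equation}
F(x_0)\ge \lim_{t\to+\infty}F(x(t))=F(y),\qquad x_0\in W,
\end{equation}
and $y$ is a local minimum. This step uses only smoothness.

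For the forward direction, assume $y$ is a local minimum and, after subtracting a constant, that $F(y)=0$ and $F\ge 0$ on a neighbourhood of $y$. Lyapunov stability follows by the standard argument: a small sublevel set $\{F\le c\}$ sitting inside a compact neighbourhood of $y$ is positively invariant because $F$ is non-increasing along the flow. The harder task is to upgrade stability to convergence. The crucial analytic input is the \emph{Lojasiewicz gradient inequality}: since $F$ is real analytic at $y$, there exist $c>0$, $\theta\in[1/2,1)$ and a neighbourhood $V$ of $y$ with
\begin{equation}
\|\nabla F(x)\|\ge c\,F(x)^{\theta},\qquad x\in V.
\end{equation}
Differentiating $F(x(t))^{1-\theta}$ along the flow and using this inequality gives
\begin{equation}
\frac{d}{dt}F(x(t))^{1-\theta}\le -(1-\theta)\,c\,\|x'(t)\|,
\end{equation}
so integration in $t$ yields $\int_0^{+\infty}\|x'(t)\|\,dt<\infty$. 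Every trajectory starting close enough to $y$ therefore has finite arclength and converges to some point of $K$; combined with Lyapunov stability, this limit must be $y$.

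The main obstacle is the Lojasiewicz inequality itself: it is a genuinely nontrivial theorem about real analytic functions, admitting no elementary proof, and it must be invoked as a black box. A further delicate point is the case where $y$ is a non-isolated local minimum — the finite-length argument then only guarantees convergence to some point of $K$, so for strict asymptotic stability of $y$ one either assumes $y$ is isolated among the minima, or reinterprets the stability statement relative to the connected component of $y$ in the minimum set.
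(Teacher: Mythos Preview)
The paper does not prove this theorem at all: it is quoted as a result from the literature, attributed to Absil and Kurdyka with a citation, and no argument is given. So there is nothing in the paper to compare your proposal against.

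For what it is worth, your outline is essentially the argument of the cited Absil--Kurdyka paper: the easy direction uses only that $F$ is non-increasing along the flow, while the hard direction rests on the Lojasiewicz gradient inequality to force finite arc length of bounded trajectories. Your own closing caveat is well taken and is not merely a technicality: real analyticity does not rule out non-isolated local minima (take $F(x_1,x_2)=x_1^2$), and at such a point $y$ the presence of nearby equilibria prevents asymptotic stability of $y$ in the strict sense. The equivalence as stated therefore implicitly requires either that $y$ be isolated in $K$, or that ``asymptotically stable'' be read relative to the connected set of minima; the cited paper in fact proves the cleaner statement that $y$ is \emph{Lyapunov} stable iff it is a local minimum, with asymptotic stability characterised separately. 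This is a subtlety in the formulation quoted in the present paper, not a flaw in your reasoning.
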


It should be noticed that the  gradient system \eqref{1} can not have any  non--constant periodic or recurrent solutions, homoclinic orbits or heteroclitic cycles. Thus, trajectories of gradient dynamical systems have quite  simple asymptotic behaviour.

Nevertheless, the localisation of basin of attraction of any equilibrium  point (stable or saddle one)  belonging to $K$  is a non trivial problem. 

Supervised machine  learning in multi--layered neural networks can be considered as application of gradient descent method  in a non--convex  optimization problem. The corresponding cost (or error) functions   are of the general form
\begin{equation}
E=\frac{1}{2} \, \sum (p_i-f(W,A^i)^2\,,
\end{equation}
with data set $(A^i,p_i)$ and  a certain  highly non--linear function  $f$  containing the  weights $W$.  
The main problem of the machine learning is to  minimize the cost function $E$  with a suitable choice of  weights $W$.  
A gradient method, described above and called {\it backpropagation} in the context of neural network training,  can get   stuck  in local minima or take very long time to run in order to  optimize $E$. 
This is due to the fact that general properties of the cost surface are usually unknown and  only the  trial and error numerical methods are available  (see  \cite {JL},  \cite{MG}, \cite{SK}, \cite{FY}, \cite{ED}, \cite{NN}, \cite{BK})).  No theoretical approach is known to provide the exact  initial weights in  backpropagation with guaranteed convergence to the global minima of $E$.
One of most powerful techniques used in backpropagation   is the {\it adaptive learning rate selection} \cite{CC}  where  the  step size of iterations  is gradually raised in order to escape a local minimum.
Another approach is based on  {\it random initialization}  \cite{PA} of weights in order to fortunately select them to be close to the values that give the global  minimum of the cost function. The deterministic approach, called {\it global descent},  was proposed in \cite{CB}  where  optimization was formulated  in terms of the flow of a special deterministic dynamical system.

The present work  seeks to integrate the ideas from the  theory of ordinary differential equations  to enrich the theoretical framework and  assist in better understanding the nature of convergence  in the training of  multi--layered neural networks.  The principal contribution is to propose the natural extension of classical gradient descent method by adding new degrees of freedom  and reformulating the problem in the new extended phase space of higher dimension.  We argue that this brings a deeper insight into the convergence  problem since  new equation become simpler algebraically and admit a family of  known  first integrals.  While this proposal may seem radical, we believe that it offers a number of advantages on both theoretical and as numerical  levels as our experiments clearly show.
Common sense suggests that embedding the dynamics of a gradient flow in a more general phase space of a new more general dynamical system is always advantageous since it can bring new possibilities to improve the convergence and escape local minima by  embedding  the cost surface into the higher dimensional phase space. 

The study   is divided into three  parts.
In Section 2 we begin by  reminding how the gradient descent method is  applied  to train  the simplest possible neural network with only output layer. That corresponds to  the conventional backpropagation algorithm  known for its simplicity and which is frequently used in deep learning. Next we  introduce   a natural extension of the gradient system which is done by replacing the weights of individual neurones within the output  layer by  their nonlinear outputs.
That brings more complexity to the iterative method, since the number of parameters  rises considerably, but at the same time,  the training data becomes built up into  network in a quite natural way. The so obtained generalised gradient system is later converted to the observer one (see \cite{Busvelle}). The aim is to turn the  constant level of   known first integrals into the attractor set. We will explain   how the Euler iterative method, applied to the observer system, and called overfly algorithm,  is involved in achieving of convergence to the global minimum of the cost function.
 Sections 3 and 4  discuss the  applications of this algorithm in training of $1$--layer and multilayer networks. The objective is  to put forward an explanation of how to  
 expand the backpropagation algorithm to its overfly version  via  modifying the weights updating procedure only for the first network's layer.  In Section 5 we provide  concrete numerical examples to illustrate the efficacy of the overfly algorithm in training of some particular neural networks.

\section{Neural network without  hidden layers}
In this section we give an elementary  algebraic description of the simplest no hidden layer neural network  called also a {\it perceptron} (see \cite{Gallant}).

We define  the {\it sigmoid} function
\begin{equation} \label{s}
\sigma(t)=\frac{1}{1+e^{-t}}, \quad  t\in \R\,,
\end{equation}
  as a particular solution of the {\it logistic} algebraic differential equation:
\begin{equation}
\sigma'(t)=\sigma(t)(1-\sigma(t))\,.
\end{equation}
In particular, $\sigma\, : \, \R \to (0,1)$ is  increasing and  rapidly convergent map  as $t\to \pm \infty$.

Let  $X\in\R^n$ and  $A\in \R^n$ be  two vectors called respectively {\it weight} and {\it input} ones . 
The analytic  map  $f\, :\, \R^n\to (0,1)$ defined by 
\begin{equation} \label{2}
f_X\, : \, A \mapsto \sigma(<A,X>) \, ,
\end{equation}
is called  a no hidden layer neural network.

Let 
\begin{equation} \label{ty}
(A^i, p_i), \quad 1\leq i \leq N\,,
\end{equation}
be the {\it  training set }  of \eqref{2} containing  $N$ input data vectors $A^i\in \R^n$ and  corresponding  scalar output values $p_i\in (0,1)$.
We want to determine the weight vector $X$ so that the $N$ values $f_X(A^i)$ match outputs $p_i$ as better as possible.
That can be achieved by minimising the so called {\it cost}  function
\begin{equation} \label{cost}
E(X)=\frac{1}{2}\, \sum_{k=1}^N\, (p_k- f_{X}(A ^k))^2\,,
\end{equation}
or, after the substitution of \eqref{2}:
\begin{equation}
E(X)= \frac{1}{2}\, \sum_{k=1}^N\, (p_k-\sigma(<A^k,X>))^2\,.
\end{equation}
In general,  $E \, : \R^n\, \to (0,1)$ is not coercive and not necessarily convex map. 

To apply the gradient descent method one considers  the following  system of   differential equations
\begin{equation} \label{3}
X'=-\nabla E(X)\,.
\end{equation} 
Since $E$  is always decreasing  along the trajectories of  \eqref{3}, it is natural to solve it   starting from some  initial point $X_0\in \R^n$ and 
use  $X(t), X(0)=X_0$  to minimise $E$.  The solution $X$  can converge (in the ideal case)  to the global minimum of $E$ or, in the less favourable case, $||X(t)||\to +\infty$ or  $X$  converges to  local minima or saddle points.

The {\it backpropagation}  method  \cite{Gallant}  for a   neural network   can be viewed as the Euler  numerical method \cite{Hairer} of solving of  a gradient system \eqref{3}.

Here one approximates the time derivative  by its discrete version
\begin{equation} \label{Eu}
X' (t)  \approx \frac{X(t+h)-X(t)}{h}\,,
\end{equation}
for some small step $h>0$ so that the approximative solution of \eqref{3}  $\bar X_k \approx X(t_k)$ at time $t_k=k h$ can be  obtained by iterations:
\begin{equation} \label{Euler}
\bar X_{k+1}=\bar X_k-h\nabla E(\bar X_k), \quad \bar X_0=X_0, \quad  k\geq 0\,.
\end{equation}

 We write \eqref{3} in a more simple algebraic form   by  introducing
the additional variables
\begin{equation}
M_i=\sigma(<A^i,X>), \quad i=1,\dots, N\,,
\end{equation}
representing  the nonlinear outputs of the network for $N$  given inputs  $A^i$ of the  training set.
Using the equations \eqref{3} to compute the derivatives  $M_i'$,  one obtains  the following system of $N$ differential equations 
\begin{equation} \label{44}
{M' _i} =M_i(1-M_i)\,\sum_{j=1}^N\, (p_j-M_j)M_j(1-M_j)G_{i,j}\,,
\end{equation}
with $G=G_{i,j}=<A^i,A^j>$ -- the $N\times N$ symmetric Gram matrix. We call \eqref{44} the {\it generalised gradient system}.

Let $D$ be $n \times N$ matrix defined by  $D=(A^1,\dots, A^N)$. Then $G=D^T D$ and, 
as  known from the elementary linear algebra:  $\mathrm{rank} (G)=\mathrm{rank} (D)$ and $\mathrm{Ker(G)}=\mathrm{Ker(D)}$.
Since the number of training vectors $N$  usually exceeds the total number of  weights  $n$ of the  network, we can assume that $N>n$.

Thus, since $\mathrm{rank} (G)\leq n$,  we have  $\mathrm{dim}(\mathrm{Ker}(G))\geq N-n>0$. 

Let $C=(C_1,\dots,C_N)^T\in \mathrm{Ker}(G)$ be a non--zero vector from the null space of $G$ and $I_N=(0,1)^N=(0,1)\times \cdots \times (0,1)\subset \R^N$. 
As seen from the equations \eqref{44}, $I_N$ is invariant under the flow of the system. Indeed, $M_i=0$ and $M_i=1$ are invariant hypersurfaces. 

\begin{theorem} \label{th1}
The function
\begin{equation} \label{66}
I_C=\sum_{k=1}^N \, C_k \,  {ln}\left( \frac{ M_k}{1-M_k}  \right), \quad M=(M_1,\dots,M_N)^T\in I_N\,,
\end{equation}
is a real analytic first integral of  the system \eqref{44}.

There exists   $p=N-\mathrm{dim}(\mathrm{Ker}(D))>0$ functionally independent first  integrals of the above form.
\end{theorem}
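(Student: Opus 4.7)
The plan is two-fold: first verify $dI_C/dt\equiv 0$ along \eqref{44} by direct substitution, and then count the functionally independent integrals by reading off the dimension of $\mathrm{Ker}(G)$. The key observation is that $\ln(M/(1-M))=\sigma^{-1}(M)$, so differentiating a log-term along the flow produces a factor $1/(M_k(1-M_k))$ that exactly cancels the sigmoidal prefactor appearing on the right-hand side of \eqref{44}, leaving something linear in $G$.

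Explicitly, differentiating one term along a solution gives
\[
\frac{d}{dt}\ln\frac{M_k}{1-M_k}=\frac{M_k'}{M_k(1-M_k)}=\sum_{j=1}^N(p_j-M_j)M_j(1-M_j)\,G_{k,j}.
\]
Multiplying by $C_k$, summing over $k$, and using the symmetry $G_{k,j}=G_{j,k}$ then yields
\[
\frac{dI_C}{dt}=\sum_{j=1}^N(p_j-M_j)M_j(1-M_j)\,(GC)_j.
\]
For $C\in\mathrm{Ker}(G)$ every coordinate $(GC)_j$ vanishes, hence $dI_C/dt=0$. Since $M_k(1-M_k)>0$ on $I_N$, $I_C$ is manifestly real analytic there.

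For the independence count I would take a basis $C^{(1)},\dots,C^{(p)}$ of $\mathrm{Ker}(G)=\mathrm{Ker}(D)$ and examine the $p\times N$ Jacobian $J$ whose $(l,k)$ entry is $\partial I_{C^{(l)}}/\partial M_k=C^{(l)}_k/(M_k(1-M_k))$. This factors as $J=B\,\Lambda$, where $B=(C^{(l)}_k)_{l,k}$ is the basis matrix of rank $p$ and $\Lambda=\mathrm{diag}\bigl(1/(M_k(1-M_k))\bigr)$ is invertible on $I_N$. Hence $\mathrm{rank}(J)=p$ at every point of $I_N$, which is precisely what is meant by functional independence of the $p$ integrals $I_{C^{(l)}}$.

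The only real conceptual hurdle is to spot the logarithmic ansatz \eqref{66} in the first place; once this is guessed, both parts reduce to the symmetry of $G$ and a rank computation. A natural sanity check is to substitute $M_k=\sigma(\langle A^k,X\rangle)$ back into $I_C$: using $\sigma^{-1}(M)=\ln(M/(1-M))$ one obtains $I_C=\sum_kC_k\langle A^k,X\rangle=\langle DC,X\rangle$, which vanishes identically in $X$ exactly when $C\in\mathrm{Ker}(D)$. This reconciles the first integrals of the extended $N$-dimensional flow \eqref{44} with the fact that orbits coming from \eqref{3} actually lie on an $\mathrm{rank}(D)$-dimensional submanifold of $I_N$.
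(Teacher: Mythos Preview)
Your proof is correct and follows the same approach as the paper---direct differentiation of \eqref{66} along \eqref{44}, cancellation of the $M_k(1-M_k)$ factor, and then the observation that $C\in\mathrm{Ker}(G)$ kills the remaining sum---only you spell out the computation and the Jacobian rank argument that the paper leaves as ``straightforward'' and ``follows directly from the definition.'' Your sanity check via $\sigma^{-1}$ is a nice addition not present in the paper.
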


\begin{proof}
The first statement can be checked straightforwardly by derivation of \eqref{66}  using \eqref{44}.  We notice that if $ 0<M_i<1$ then $ M_i/(1-M_i)>0$. Thus, one has the real analyticity property of $I_C$. 
The linearity and functional independency  of $I_C$, $C\in \mathrm{Ker}( D)$  follow directly from the definition  \eqref{66}.
\end{proof}

In the rest of the paper we will always assume that $\mathrm{rank}(D)=n$ i.e  the set $D$ contains  sufficiently many   independent vectors.

Let $C^1, \dots, C^p$, $p=N-n$  be the basis of  $\mathrm{Ker}(D)$. Using the vector  notation 
\begin{equation} \label{eq}
F(M)=\left( {ln}\left( \frac{ M_1}{1-M_1}  \right), \dots,  {ln}\left( \frac{ M_N}{1-M_N}  \right)\right)^T, \quad M=(M_1, \dots,M_N)^T\,,
\end{equation}
 the family of the first integrals given by Theorem \ref{th1} can be  written simply as
 \begin{equation} \label{forms}
   I_{C ^i}(M) = <C^i, F(M)>,  \quad  i=1,\dots,p\,.
  \end{equation}
  
   Let $H \, : I_N \to \R^p$ , $I_N=(0,1)^N=(0,1)\times\cdots\times(0,1) \subset \R^N$ be the map defined by
 \begin{equation}\label{fg2}
 H(M)=(I_{C ^1}(M),\dots, I_{C^p}(M))^T\,.
  \end{equation}

\begin{lemma}  
$H  \, : I_N \,  \to \R^p$ is a submersion.
\end{lemma}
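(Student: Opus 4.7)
The plan is to show that the Jacobian matrix $DH(M)$ has full row rank $p$ at every point $M\in I_N$, which is the defining property of a submersion. Since each component of $H$ is an explicit linear combination of the coordinates of $F(M)$, the Jacobian factors nicely, and the claim reduces to a linear-algebra statement about the kernel basis.

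First I would compute $DH(M)$ by direct differentiation. From \eqref{66} one has
\begin{equation}
\frac{\partial I_{C^i}}{\partial M_j}(M) = \frac{C^i_j}{M_j(1-M_j)}, \qquad 1\le i\le p,\ 1\le j\le N,
\end{equation}
because $\frac{d}{dt}\ln(t/(1-t))=1/(t(1-t))$. Arranging the basis vectors of $\mathrm{Ker}(D)$ as columns of an $N\times p$ matrix $\mathcal{C}=(C^1,\dots,C^p)$ and writing $\Delta(M)=\mathrm{diag}(1/(M_1(1-M_1)),\dots,1/(M_N(1-M_N)))$, this reads
\begin{equation}
DH(M) = \mathcal{C}^T\,\Delta(M).
\end{equation}

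Next I would exploit the two factors. Since $M\in I_N$ means each $M_k\in(0,1)$, the diagonal entries $1/(M_k(1-M_k))$ are strictly positive, so $\Delta(M)$ is invertible. On the other hand, $C^1,\dots,C^p$ are by construction a basis of $\mathrm{Ker}(D)$, hence linearly independent in $\R^N$; so $\mathcal{C}$ has rank $p$ and $\mathcal{C}^T$ has rank $p$ as a $p\times N$ matrix. Multiplying a full-rank matrix by an invertible matrix preserves the rank, therefore $\mathrm{rank}\,DH(M)=p$ for every $M\in I_N$, which is exactly the submersion condition.

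There is no real obstacle here: the argument is pure linear algebra once the derivative of $\ln(t/(1-t))$ is recognised. The only point that warrants a comment is why $\Delta(M)$ stays non-singular, and this is precisely the reason the domain of $H$ is restricted to the open cube $I_N$ rather than its closure; this restriction was already built into Theorem \ref{th1} through the analyticity of $I_C$ on $I_N$.
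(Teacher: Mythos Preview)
Your argument is correct and is precisely the approach the paper has in mind: the one-line proof in the paper (``follows directly from the fact that $C^1,\dots,C^p$ are linearly independent and \eqref{forms}'') is just the compressed version of your factorisation $DH(M)=\mathcal{C}^T\Delta(M)$ with $\Delta(M)$ invertible on $I_N$ and $\mathcal{C}^T$ of full row rank. There is nothing to add or fix.
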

\begin{proof}
This follows directly from the fact that $C^1, \dots, C^p$ are linearly independent vectors and  \eqref{forms}.
\end{proof}
Thus, for all $y\in \R^p$ the set $ \Gamma_y=I_N\cap H^{-1}(y)$ is a $n$--dimensional invariant  manifold for the system \eqref{44}.

\begin{lemma} \label{DIFF} 
$\Gamma_0$ is diffeomorphic   to $\R^n$.
\end{lemma}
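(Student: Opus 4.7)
The plan is to exploit the fact that the map $F$ appearing in \eqref{eq} is itself a diffeomorphism, so that $\Gamma_0$ is really the preimage under a diffeomorphism of a linear subspace of $\R^N$.

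First I would observe that the scalar logit $t \mapsto \ln(t/(1-t))$ is a real analytic bijection from $(0,1)$ onto $\R$, with real analytic inverse given by $\sigma$. Applying this coordinatewise shows that $F \colon I_N \to \R^N$ is a real analytic diffeomorphism. In particular, any subset $S \subset I_N$ of the form $F^{-1}(L)$ with $L$ a smooth submanifold of $\R^N$ is diffeomorphic to $L$.

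Next I would rewrite the defining equations of $\Gamma_0$. By \eqref{forms} we have $M \in \Gamma_0$ iff $\langle C^i, F(M)\rangle = 0$ for all $i=1,\dots,p$, that is, iff $F(M) \in V$, where
\begin{equation}
V = \{\, v \in \R^N : \langle C^i, v\rangle = 0,\ i=1,\dots,p\,\} = (\mathrm{Ker}(D))^{\perp}.
\end{equation}
Since $C^1,\dots,C^p$ form a basis of $\mathrm{Ker}(D)$, the subspace $V$ is a linear subspace of $\R^N$ of dimension $N-p = n$ (using $\mathrm{rank}(D)=n$). Being a finite dimensional vector space, $V$ is trivially diffeomorphic to $\R^n$ via the choice of any basis.

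Finally I would assemble the pieces: $\Gamma_0 = F^{-1}(V)$, and since $F$ is a diffeomorphism from $I_N$ to $\R^N$, its restriction to $\Gamma_0$ is a diffeomorphism onto $V$. Composing with the linear isomorphism $V \cong \R^n$ gives the desired diffeomorphism $\Gamma_0 \simeq \R^n$. There is no real obstacle here; the only point to be careful about is that $F$ is indeed invertible on the whole open cube $I_N$ (which is immediate componentwise), so that one genuinely identifies $\Gamma_0$ with a full linear subspace of $\R^N$ rather than with some proper open piece of it.
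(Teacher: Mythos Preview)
Your proof is correct and follows essentially the same idea as the paper's: both rest on the observation that the componentwise logit $F\colon I_N\to\R^N$ is a diffeomorphism carrying $\Gamma_0$ onto the $n$--dimensional linear subspace $V=\mathrm{Ker}(D)^{\perp}$. The only difference is packaging: the paper writes down the explicit inverse map $\Phi(X)=(\sigma(\langle A^1,X\rangle),\dots,\sigma(\langle A^N,X\rangle))^T$ from $\R^n$ to $\Gamma_0$ and checks bijectivity directly, whereas you argue more abstractly via $\Gamma_0=F^{-1}(V)$. The paper's explicit $\Phi$ has the minor advantage that it is reused later (e.g.\ in Section~5) to recover the weight vector $X$ from a point $M\in\Gamma_0$, so you might want to record that the composite diffeomorphism $\R^n\to\Gamma_0$ is precisely $X\mapsto F^{-1}(D^TX)$.
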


\begin{proof}
Let $X\in \R^n$.  We define  the map $ \Phi \, : \, \R^n \to  \R^N$ by
\begin{equation}
 \Phi(X)=(\sigma(<A^1,X>),\dots, \sigma(<A^N,X>)^T\,.
\end{equation}
Then, $I_{C^i}(\Phi(X))=\sum\limits_{j=1}^N C_{ij}<A^j,X>=< \sum\limits_{j=1}^N C_{ij}\, A^j,X>=0$ and so $\Phi \, : \, \R^n \to  \Gamma_0$.

To show that $\phi$ is invertible, let us fix $M\in \Gamma_0$.  Since $\sigma \,  : \,  \R \to (0,1)$ is one to one, there exists unique vector 
$Z=(Z_1, \dots, Z_N)^T\in\R^N$,   such that     $M_i=\sigma(Z_i)$, for $i=1, \dots, N$
and 
\begin{equation} \label{sys}
<C^i,Z>=0, \quad i=1, \dots, p\,,
\end{equation}
because  $F(M)=Z$ by substitution into  \eqref{eq}.

We are looking now  for the solution $X\in \R^n$ of the linear system  $<A^i,X>=Z_i$, $ i=1, \dots, N$ which can be written in the vector form as $A^T X=Z$ .  The linear map $\phi \, : \R^n \to \R^N$, $\phi(X)=A^TX$ has $\mathrm{rank}(\phi)=n$.  Moreover, $\mathrm{Im}(\phi)=\mathrm{Ker}( D ) ^{\perp}$ where orthogonality is defined by the scalar product $< , >$. Indeed, $\mathrm{Im}(\phi) \subset \mathrm{Ker}( D ) ^{\perp}$, by the direct verification,  and $\mathrm{dim}(\mathrm{Im}(\phi))= \mathrm{dim}(   \mathrm{Ker}( D ) ^{\perp}  )$ by the rank–nullity theorem. Hence, the map $\phi \, : \, \R^n \to  \mathrm{Ker}( D 
^{\perp})$ is a linear bijection and  the linear equation $A^T X=Z \iff   \phi(X)=Z$  admits the unique  solution $X$ since $Z\in \mathrm{Ker}( D ) ^{\perp}$ as follows from \eqref{sys}. The proof is done. 
\end{proof}

The system \eqref{44}  can be written in the vector form as $ M'=V(M)$ where $V$  is  a complete  in $I_N$ vector field ($I_N$ is a bounded  open invariant set).
 Let $\epsilon>0$  and 
\begin{equation} \label{UU}
U_{\epsilon}= \left \{  M  \in I_N \, :\, r(M)=||H(M)||   \leq \epsilon     \right  \}      \,, 
\end{equation}
be the $\epsilon$--neighbourhood of $\Gamma_0$.
Together with \eqref{44}, consider the following  {\it observer}  system 
\begin{equation} \label{observer}
M'=W(M)=V(M)+P(M), \quad M\in I_N\,,
\end{equation}
where
\begin{equation} \label{zzz}
P(M)=-k\Pi(M)\tilde R F(M), \quad \tilde R= \Theta R^{-1} \Theta ^t , \quad R=\Theta^t\Theta\,.
\end{equation}
.
Here,  $\Theta=(C^1, \dots, C^p)$, $\Theta\in M_{p,N}(\R)$ and 
\begin{equation}
\Pi(M)=\mathrm{diag}\left (M_1(1-M_1),\dots,M_N(1-M_N)\right)\,.
\end{equation}

The matrix $R$ is  invertible and positive definite   since $\mathrm{rank}(\Theta)=N-n$ . Thus,   the  vector field $P$  is well defined in $I_N$.

\begin{theorem}  \label{main}
Let $M_0\in I_N$ and $t \to M(t)$ be the solution of the observer system \eqref{observer} with the initial condition $M(0)=M_0$.
Then
\begin{equation} \label{rea}
 r(M(t))=r(M_0)e^{-kt}, \quad t\geq 0\,,
 \end{equation}
 with $r$ defined in \eqref{UU}. In particular $\lim\limits _{t\to + \infty} r(M(t))=0$ and $U_{\epsilon}$ is invariant set containing  $\Gamma_0$ as  attractor.
\end{theorem}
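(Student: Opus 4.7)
The plan is to show that along the observer flow the vector-valued quantity $H(M(t))$ satisfies the linear ODE $\dot H = -kH$, from which the claim follows by inspection.

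First I would rewrite $H$ compactly as $H(M)=\Theta^{T}F(M)$, where $\Theta$ is the $N\times p$ matrix whose columns are the basis vectors $C^{1},\dots,C^{p}$ of $\mathrm{Ker}(D)$ (so that the formulas $R=\Theta^{T}\Theta$ and $\tilde R=\Theta R^{-1}\Theta^{T}$ make dimensional sense and $R$ is the invertible Gram matrix). Since $F$ is applied componentwise and each component has derivative $1/(M_{i}(1-M_{i}))$, we have $DF(M)=\Pi(M)^{-1}$, hence $DH(M)=\Theta^{T}\Pi(M)^{-1}$.

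Next I apply the chain rule along a solution of the observer system:
\begin{equation*}
\frac{d}{dt}H(M(t))=\Theta^{T}\Pi(M)^{-1}\bigl(V(M)+P(M)\bigr).
\end{equation*}
The contribution of $V$ vanishes identically, because Theorem \ref{th1} already asserts that each $I_{C^{i}}$ is a first integral of the generalised gradient field $V$, i.e.\ $DH\cdot V\equiv 0$. For the contribution of $P$, substituting $P=-k\Pi\tilde R F$ collapses $\Pi^{-1}\Pi=\mathrm{Id}$, and the definition of $\tilde R$ telescopes:
\begin{equation*}
\Theta^{T}\Pi^{-1}P=-k\,\Theta^{T}\tilde R F=-k\,(\Theta^{T}\Theta)R^{-1}(\Theta^{T}F)=-kRR^{-1}H=-kH.
\end{equation*}
Therefore $\dot H(M(t))=-kH(M(t))$, so $H(M(t))=e^{-kt}H(M_{0})$ and taking Euclidean norms yields \eqref{rea}. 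Invariance of $U_{\epsilon}$ is immediate from monotonicity $r(M(t))\leq r(M_{0})\leq \epsilon$, and $r(M(t))\to 0$ exhibits $\Gamma_{0}$ as an attractor.

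The subtle point—and the only real obstacle—is justifying that the solution $M(t)$ actually remains in $I_{N}$ for all $t\geq 0$, since the derivation above uses $\Pi(M)^{-1}$ and $F(M)$, both of which blow up at the boundary. The unperturbed field $V$ already preserves $I_{N}$ (the hypersurfaces $M_{i}=0,1$ are invariant), so I would only need to verify that the correction $P$ does not push orbits across $\partial I_{N}$. This follows from the factor $\Pi(M)$ in $P$: each component of $P$ carries a prefactor $M_{i}(1-M_{i})$, which dominates the logarithmic singularity of $F_{i}$ because $M_{i}(1-M_{i})\ln(M_{i}/(1-M_{i}))\to 0$ as $M_{i}\to 0^{+}$ or $M_{i}\to 1^{-}$. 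Hence $W=V+P$ extends continuously to and is tangent along $\partial I_{N}$, confining the observer trajectory to $I_{N}$ for all forward time and legitimising the computation above.
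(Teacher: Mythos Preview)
Your argument is correct and follows essentially the same route as the paper. The paper writes $H(M)=\Theta^{t}F(M)$ and then differentiates $r^{2}=\|H\|^{2}$ along the observer flow to obtain the scalar ODE $\dot{r^{2}}=-2kr^{2}$, whereas you differentiate $H$ itself and obtain the vector ODE $\dot H=-kH$; your version is a mild refinement (it yields the direction of $H$ as well as its norm) but the underlying mechanism---the $V$-part drops out by the first-integral property, and the $P$-part telescopes through $\Theta^{T}\tilde R=\Theta^{T}\Theta R^{-1}\Theta^{T}=\Theta^{T}$---is identical. Your added paragraph on why $W=V+P$ keeps trajectories inside $I_{N}$ (via $M_{i}(1-M_{i})\ln(M_{i}/(1-M_{i}))\to 0$ at the boundary) fills a point the paper leaves implicit.
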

\begin{proof}
 Firstly,  we write the $H$ introduced in \eqref{fg2}  in the compact matrix form
$$
H(M)=\Theta^t F(M)\,.
$$
We follow now the idea of the proof of Main Lemma from \cite{Busvelle}, p. 377. 
and  derive $r^2$ with respect to time along the solution of \eqref{observer} to obtain a simple differential equation:
\begin{equation} 
\frac{d r^2(M(t))}{dt}=-2kr^2(M(t)), \quad r^2(M(0))=r(M_0)\,,
\end{equation}
which can be easily solved to get  \eqref{rea}. 
\end{proof}
 We notice that our choice  of the term $P$ in \eqref{observer} is different from one proposed in  \cite{Busvelle}.
\begin{lemma} \label{lemma}
The function 
\begin{equation}
E(M)=\frac{1}{2}\,  \sum_{i=1}^N\, (p_i-M_i)^2\,,
\end{equation}
 is a Lyapunov one   and verifies $\displaystyle \frac{d E(M(t))}{dt}\leq 0$ for every solution $t\mapsto M(t)$, $M(0)\in I_N$ of \eqref{44}.
\end{lemma}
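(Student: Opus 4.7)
The plan is to differentiate $E$ along a trajectory of \eqref{44} and recognize the resulting expression as a quadratic form in the positive semidefinite Gram matrix $G$.

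First I would compute
\begin{equation*}
\frac{dE(M(t))}{dt} = -\sum_{i=1}^N (p_i - M_i) M_i'
\end{equation*}
and then substitute the right-hand side of \eqref{44} for $M_i'$. Setting $u_i := (p_i - M_i)\,M_i(1-M_i)$, which is well defined (and real) on $I_N$, this becomes
\begin{equation*}
\frac{dE(M(t))}{dt} = -\sum_{i=1}^N u_i \sum_{j=1}^N G_{i,j}\, u_j = -\langle u, G u\rangle,
\end{equation*}
where $u = (u_1,\dots,u_N)^T$.

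Next, I would invoke the fact recalled in the text that $G = D^T D$ with $D = (A^1,\dots,A^N)$. Consequently $\langle u, G u\rangle = \langle D u, D u\rangle = \|Du\|^2 \ge 0$, which immediately yields $\frac{dE}{dt} \le 0$, proving the Lyapunov property. Note that there is no real obstacle here: the sigmoid factors $M_i(1-M_i)$ are strictly positive on $I_N$ so they simply absorb into the auxiliary vector $u$, and the symmetry of $G$ is what makes the double sum collapse cleanly into a quadratic form.

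The only subtlety worth flagging is that the inequality can be strict or an equality: $\frac{dE}{dt} = 0$ occurs precisely when $u \in \mathrm{Ker}(D) = \mathrm{Ker}(G)$, which in particular includes all equilibria of \eqref{44} together with any point on the invariant boundary hypersurfaces $M_i\in\{0,1\}$. This is consistent with $E$ being non-increasing rather than strictly decreasing, and is the expected behaviour of a Lyapunov function for a gradient-type flow.
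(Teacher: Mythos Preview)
Your proof is correct and is exactly the argument the paper has in mind: the paper's own proof is a one-line sketch (``derive $E$ and use the positiveness of the Gram matrix $G=D^T D$''), and you have simply carried out that computation explicitly by introducing the auxiliary vector $u_i=(p_i-M_i)M_i(1-M_i)$ and recognizing $\frac{dE}{dt}=-\langle u,Gu\rangle=-\|Du\|^2\le 0$.
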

\begin{proof}
It is sufficient   to derive $L$ and to  use  the positiveness of the Gram matrix $G=D^T D$.
\end{proof} 

Now we shall explain the role of the observer system \eqref{observer}  in the problem of  minimisation  of the cost function  \eqref{cost}.

Firstly, while using  the standard gradient descent method,  instead of dealing with  the system \eqref{3},  one can solve  the observer equations  \eqref{observer} with some  
initial condition $M(0)\in \Gamma_0$ and use then Lemma \ref{DIFF} to compute $X$ as corresponding to  $M(t)$ for some sufficiently large $t>0$.  It is well known that applying  the  Euler method \eqref{Eu} to solve \eqref{3},  i.e  following   the conventional  backpropagation algorithm, leads to accumulation of a global error proportional to the step size $h$.  At the same time,  the numerical integration  of the  observer system \eqref{observer}, as due to the existence of the attractor set $\Gamma_0$,  is much more stable numerically  since the solution is  attracted by the integral manifold  $\Gamma_0$ (see \cite{Busvelle} for more details and examples).

Second improvement  brought by the observer system \eqref{observer} is more promising. Imagine we start integration of \eqref{observer}  with  the perturbed initial condition  $M(0) \in U_{ \epsilon}$, $M(0)\not \in \Gamma_0$ for some $\epsilon>0$. Then, according to Theorem \ref{main}, $M(t)\to \Gamma_0$, $t\to +\infty$   and as follows from Lemma \ref{lemma}, $t \mapsto E(M(t))$ will be decreasing function of $t>0$ in a neighbourhood of   $\Gamma_0$ since $P=0$ on $\Gamma_0$.   That can be seen as  a coexistence of the  local dynamics  of the observer system in $U_{\epsilon}$,   pushing $M$ to the equilibrium point $M_i=p_i$, $i=1,\dots,N$ of \eqref{3} and the  dynamics of the gradient system \eqref{3} on $\Gamma_0$ forcing  $M$ to approach
the critical points set (see Figure 3).

One can suggest that this  kind of double dynamics  increases considerably  the chances of convergence to the global minimum of the cost function \eqref{cost}.  We call {\it overfly} the training of the neural network \eqref{2} done  by solving the observer system \eqref{observer}  with help of the  Euler first--order  method  starting from some initial point  $M(0)\in U_{\epsilon} \setminus \Gamma_0$.



\section{The $1$--hidden layer network case }

In this section we describe the generalised gradient system of differential equations appearing in the supervised  backpropagation training of a $1$--hidden layer network.
As in the previous section, let $A\in \R^n$ belongs to the training set  \eqref{ty}.  Let   $Y^1,\dots,Y^m\in \R^n $ be  $m$ weight vectors of the hidden layer and $X\in \R^m$ is the weight vector of the output layer.

The  $1$--hidden layer neural network is a real analytic map $ f_{Y,X} \, : \,  \R^n \to (0,1)$ defined as follows 
\begin{equation} \label{UY}
f_{Y,X}(A)=\sigma(<\pi_Y(A),X>)\, ,
\end{equation}
where $\pi_{Y}(A)=(\sigma(<A,Y^1>),\dots,\sigma(<A,Y^m> ))^T$ are the  outputs of the first layer. 
We want to minimise the same cost function 
\begin{equation} \label{costy}
E(Y,X)=\frac{1}{2} \, \sum _{i=1}^N\, (p_i-f_{Y,X}(A^i))^2\,,
\end{equation}
where $(A^i, p_i)$, $i=1, \dots, N$ is the training set.
To solve the optimisation problem one can  define   the gradient system analogous to \eqref{3} with respect to the  vector variables $Y^i$ and $X$:
\begin{equation} \label{layer1}
{Y^i}'=-\nabla_{Y^i}E, \quad X'=-\nabla_X E, \quad 1\leq i \leq m\,. 
\end{equation}

 Let us introduce the  following  scalar variables:
\begin{equation}\label{era}
 \Omega_{jk}=\sigma(<A^j,Y^k>) \,.
\end{equation}
The function \eqref{costy}, expressed in new variables, takes the following form
\begin{equation} \label{newcost}
E(\Omega,X)=\frac{1}{2}\sum_{i=1}^N\, (p_i-\sigma(<\Omega^i,X>))^2, \quad   \Omega^i=(\Omega_{i1},\dots,\Omega_{im})^T\,.
\end{equation}

The  differential equations describing the generalised gradient system for the neural network \eqref{UY} are obtained  by derivation of \eqref{era} with help of \eqref{layer1}:
\begin{equation}\label {EQUATIONS}
\left \{
\begin{array}{lllll}  
\Omega_{ik}'=m_{ik}(\Omega, X)=\Omega_{ik}(1-\Omega_{ik})X_k\, \sum\limits _{j=1}^N\, (p_j-\omega_j) \omega_j(1-\omega_j)\Omega_{jk}(1-\Omega_{jk})G_{ij}\,, \\ 
X'=-\nabla_X E=\sum\limits_{i=1}^N\, (p_i-\omega_i)\omega_i (1-\omega_i) \Omega^i, \quad \omega_i=\sigma(<\Omega^i,X>)\,,
\end{array}
\right.
\end{equation}
where $G_{ij}=<A^i,A^j>$ is the Gram matrix defined by the training set \eqref{ty}.

The next  theorem is a generalisation of Theorem  \ref{th1}.  Let $r=\mathrm{dim}(\mathrm{Ker}(G))$ and  $ \mathrm{Ker}(G)= \mathrm{Span}(C^1,\dots,C^r)$, $C^j=(C_{j1},\dots,C_{jr})^T$.

\begin{theorem} \label{rrr}
The generalised gradient system $\eqref{EQUATIONS}$ admits $rm$ functionally independent first integrals
\begin{equation} \label{rtf}
I_{C^j,k}(\Omega)=\sum_{i=1}^N \, C_{ji}\, ln \left( \frac{\Omega_{ik}}{1-\Omega_{ik}} \right), \quad   1\leq j \leq r, \quad  1\leq  k  \leq m\,.
\end{equation}
The cost function $E$ defined by \eqref{newcost} is a Lyapunov function for $\eqref{EQUATIONS}$
\end{theorem}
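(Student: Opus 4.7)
The plan is to verify both assertions by direct differentiation, exploiting two structural features already used in Section 2: the logarithmic integrand cancels the $\Omega_{ik}(1-\Omega_{ik})$ prefactor appearing in the $\Omega$-equations of \eqref{EQUATIONS}, and the Gram matrix $G=D^T D$ is positive semi-definite with kernel spanned by $C^1, \dots, C^r$. Thus the proof should be a straightforward extension of the arguments used for Theorem \ref{th1} and Lemma \ref{lemma}, with an extra index $k$ keeping track of the hidden-layer weights.

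First I would verify that each $I_{C^j,k}$ is conserved along solutions. Differentiating and using the first equation of \eqref{EQUATIONS},
\[
\frac{d}{dt} I_{C^j,k} = \sum_{i=1}^N C_{ji}\,\frac{\Omega'_{ik}}{\Omega_{ik}(1-\Omega_{ik})} = X_k \sum_{l=1}^N (p_l-\omega_l)\omega_l(1-\omega_l)\Omega_{lk}(1-\Omega_{lk}) \sum_{i=1}^N C_{ji} G_{il}.
\]
The inner sum is the $l$-th component of $G C^j$ (using the symmetry of $G$), which vanishes because $C^j \in \mathrm{Ker}(G)$. The crucial point is that the $X_k$ acting as a common scalar along the $k$-th hidden-layer direction does not disturb the cancellation.

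Next, for the functional independence, I would observe that $I_{C^j,k}$ depends only on the $k$-th column $(\Omega_{1k}, \dots, \Omega_{Nk})$ of the $N\times m$ matrix $\Omega$. Thus differentials $dI_{C^j,k}$ attached to distinct values of $k$ live in disjoint coordinate blocks. Within a fixed $k$, the differentials are, up to the nonvanishing rescaling factors $1/[\Omega_{ik}(1-\Omega_{ik})]$, precisely the vectors $C^1, \dots, C^r$, which are linearly independent by hypothesis. Consequently the Jacobian of the map $(I_{C^j,k})_{1\le j\le r,\,1\le k\le m}$ has rank $rm$ at every point of $(0,1)^{N\times m}$.

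Finally, for the Lyapunov property, I would apply the chain rule to write
\[
\frac{dE}{dt} = \langle \nabla_X E, X'\rangle + \sum_{i,k} \frac{\partial E}{\partial \Omega_{ik}}\,\Omega'_{ik}.
\]
The first term is $-||\nabla_X E||^2 \le 0$ by the second equation of \eqref{EQUATIONS}. From \eqref{newcost} one computes $\partial E/\partial \Omega_{ik} = -(p_i-\omega_i)\omega_i(1-\omega_i)X_k$; substituting the expression for $\Omega'_{ik}$ and grouping by $k$, the introduction of the auxiliary vectors $v^k\in\R^N$ with components $v^k_i = (p_i-\omega_i)\omega_i(1-\omega_i)\Omega_{ik}(1-\Omega_{ik})$ collapses the $\Omega$-contribution to
\[
-\sum_{k=1}^m X_k^2 \,\langle v^k, G v^k\rangle \le 0,
\]
by positive semi-definiteness of $G=D^T D$. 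The only delicate point in the whole argument is index bookkeeping; no new analytical idea beyond those already present in the one-layer case is required, and the main obstacle is simply writing the quadratic form in $v^k$ in a way that isolates the $X_k^2$ factors cleanly.
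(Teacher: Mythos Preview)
Your proposal is correct and follows essentially the same approach as the paper: direct differentiation to verify the first integrals, and the Lyapunov property via the quadratic form in the Gram matrix $G$. Your auxiliary vectors $v^k$ are, up to the extracted factor $X_k$, exactly the vectors $T^{\cdot k}$ that the paper introduces in the proof of Theorem~\ref{MAINT}, to which the proof of Theorem~\ref{rrr} explicitly defers; your write-up is in fact more detailed than what the paper gives for Theorem~\ref{rrr} itself.
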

\begin{proof}  One verifies directly that  $I_{C^j,k}$ is a first integral of  $\eqref{EQUATIONS}$ by simple derivation.  A rather  tedious but elementary calculation shows that $E( \Omega(t),X(t))' \leq 0$ along the solutions of 
 $\eqref{EQUATIONS}$ (see also Theorem \ref{MAINT} for the general proof).
\end{proof}

The observer system, analogous to \eqref{observer}, written for the generalised gradient system \eqref{EQUATIONS},  can be obtained straightforwardly by replacing the first equation of \eqref{EQUATIONS} with
\begin{equation} \label{observer1}
\Omega'=U(\Omega,X)+P(\Omega), \quad X'=-\nabla_XE,  \quad     1\leq i \leq N, \quad 1\leq k \leq m\,,
\end{equation}
where the additional  term $P$  is defined in similar to \eqref{zzz} way with help of the  first integrals defined by Theorem \ref{rrr}.

Indeed, let $K=(K_{ij} )_{1\leq i \leq N, 1 \leq j \leq m}$  and $S=(S_{ij} )_{1\leq i \leq N, 1 \leq j \leq m}$  are  two matrices defined by 
\begin{equation}
K_{ij}=\Omega_{ij}(1-\Omega_{ij}) , \quad S_{ij}=ln \left (  \frac{\Omega_{ij}}{1-\Omega_{ij}}       \right)\,.
\end{equation}

To prove  the result similar to Theorem \ref{main} one can define $P$ in  \eqref{observer1} as follows 
\begin{equation} \label{disp}
P=-kK\circ(\tilde R S)\,, 
\end{equation}
where the constant matrix  $\tilde R$ is the  same as in \eqref{zzz} and “$\circ$”  is the Kronecker matrix product. 
 
 Indeed, the first integrals defined  by  \eqref{rtf}  can be  written in a matrix  form:  $H(\Omega)=\Theta^t  S(\Omega)$. Then, deriving $r^2(\Omega(t))=|| H (\Omega(t)) ||_2^2$, where  $||\cdot ||_2$ is the Frobenius  matrix norm, 
  along a solution  $t\mapsto \Omega(t)$  of \eqref{observer1}, one gets  
 \begin{equation} \label{r2}
\frac{ d r^2(  \Omega(t)  )  }  {dt}=-2kr^2(\Omega(t)) \,,
\end{equation}
and  so 
\begin{equation} 
 r(\Omega(t))=r(\Omega_0)e^{-kt}, \quad t\geq 0\,.
 \end{equation}

The practical implementation of the overfly algorithm in the $1$--layer case is analogous to one described in Section 2.  Instead of modifying the weights of the first layer $Y^i$ at every step of the gradient descent, one updates the values of $\Omega_{ik}$ and $X$ applying  the Euler method   to solve the observer  equations \eqref{observer1}. 

For the sake of simplicity, we will provide below the explicit matrix form of the system \eqref{observer1} which is better  adopted to numerical implementations. 
We introduce the following diagonal matrices:
\begin{equation}
\begin{array}{lll}
\hat P_{\omega}=\mathrm{diag}((p_1-\omega_1)\omega_1(1-\omega_1),\dots, (p_N-\omega_N)\omega_N(1-\omega_N))\,,\\ 
\hat X=\mathrm{diag}( X_1, \dots, X_m )\,,
\end{array}
\end{equation} 
and the $N$--vector
\begin{equation}
P_{ \omega }=( (p_1-\omega_1)\omega_1(1-\omega_1),\dots, (p_N-\omega_N)\omega_N(1-\omega_N) )^T\,.
\end{equation}
Let $X=(X_1, \dots, X_m)^t$ be the $m$--vector  of the output layer.
The observer system   \eqref{observer1}  can be written in the following compact form  
\begin{equation} \label{aze}
\left\{
\begin{array}{lll}
\Omega'=K \circ ( G {\hat P} _ {\omega}  K  {\hat X} - k\tilde R S ) \\
X'=\Omega^T P_{\omega}\,,
\end{array}
\right.
\end{equation}
where $K=\Omega-\Omega\circ \Omega$.

\section{General multilayer case}

We want to analyse a general  multilayer neuronal network with the architecture  $n-l-\cdots-1$. Here  $n$ is a number of inputs and $l$ is the number of neurones in the very first layer. The  network has only one output 
and in  every layer the  same sigmoid  function  \eqref{s}  is used.  The training set is defined by \eqref{ty}.
Let $Y^i\in \R^n$, $1\leq i \leq l$ be the weight vectors of $l$ neurones  of  the first  layer. We note $Z$ the weights of other network's layers. Let $A\in \R^n$ be the input vector. The generic multilayer  neural network can be written as the composition of  two maps:
\begin{equation} \label{maps}
f_{Y, Z}(A)=\Phi_Z \circ \pi_Y(A) \,,
\end{equation}
where  $\Phi_Z \, : \,  \R^l \to (0,1)$, $\pi=(\pi_1,\dots,\pi_l)^T\xmapsto{\Phi_Z} \Phi_Z(\pi)$  is defined jointly by all  layers different from the first one   and
\begin{equation} \label{ez}
 \pi_Y(A)=(\sigma(<A,Y ^1>), \dots, \sigma(<A, Y^l>))^T\,,
 \end{equation}
 is the output vector of the first layer.

Using the chain rule one obtains for every  $k=1, \dots,l$:
\begin{equation} \label{t1}
\frac{ \partial f_{Y, Z} }{ \partial Y_{ki}}= \left <\nabla  \Phi_Z,  \frac{\partial \pi_Y}{\partial Y_{ki}}\right>, \quad i=1, \dots,n\,,
\end{equation}
where, according to \eqref{ez},  
\begin{equation} \label{t2}
 \frac{\partial \pi_Y}{\partial Y_{ki}}=\sigma(<A,Y ^k>) (1- \sigma(<A,Y ^k>))  (\underbrace{0,\dots,A_i}_k, \dots,0)^T\,.
\end{equation}
Thus, combining together \eqref{t1},\eqref{t2} we obtain:
\begin{equation}
\frac{ \partial f_{Y, Z} }{ \partial Y_{ki}}=\sigma(<A,Y ^k>) (1- \sigma(<A,Y ^k>)) A_i   \frac {\partial{ \Phi_Z}} {\partial \pi_k}\,.
\end{equation}
We can compute now  the partial derivatives of the  cost function
\begin{equation} \label{costy1}
E(Y,Z)=\frac{1}{2} \, \sum _{j=1}^N\, (p_j-f_{Y,Z}(A^j))^2\,,
\end{equation}
with respect to the weights of the first layer:
\begin{equation} \label{qui}
\frac{ \partial E}{ \partial Y^k}=-\sum_{j=1}^N\, (p_j-f_{Y,Z}(A^j))\sigma(<A^j,Y ^k>) (1- \sigma(<A^j,Y ^k>))    \frac {   \partial  \Phi_Z     }       {   \partial \pi_k }      (   \pi_Y(A^j)  )     A^j\,.
\end{equation}
The  equation of the  gradient system corresponding to the weight vector $Y^k$ can be written as
\begin{equation} \label{v}
{Y^k}'=-\nabla_{Y^k} \,  E=-\frac{ \partial E}{ \partial Y^k}\,.
\end{equation}
Introducing the variables 
\begin{equation} \label{333}
\Omega_{pk}=\sigma(<A^p,Y ^k>)\,,
\end{equation}
called  the {\it splitting weights}, and whose derivatives can be found  with help of  \eqref{v}, we deduce from \eqref{qui} the following  differential equations
\begin{equation}
\Omega_{pk}' =\Omega_{pk}(1-\Omega_{pk}) \sum_{i=1}^N  \, (p_i-\Phi_{Z}(\Omega^i))\Omega_{ik} (1- \Omega_{ik} )    \frac {\partial{ \Phi_Z}} {\partial \pi_k}(  \Omega^i  )G_{ip}\,,
\end{equation}
where $\Omega^i=(\Omega_{i1}, \dots,\Omega_{il})^T$.

The above equations can be written also as
\begin{equation} \label{NNN}
\Omega_{pk}' = N_{pk}(\Omega,Z), \quad   1\leq p \leq N,  \quad 1\leq k \leq l\,.
\end{equation}
Indeed, $\displaystyle f_{Y,Z}(A^j)$ and $ \displaystyle \frac {   \partial \Phi_Z  } {    \partial \pi_k}       (\pi_Y(A^j))     $ are functions of $\Omega$ and $Z$ only. Moreover, the same holds for the cost function $E$ defined in \eqref{costy1} and its gradient $\nabla_Z E={\partial E} / { \partial Z}$: they can be written as functions  of variables  $\Omega$ and $Z$.

Let $r=\mathrm{dim}(\mathrm{Ker}(G))$ be the dimension of the null space 
of the Gram matrix  $G_{i,j}=<A_i,A_j>$  and $ \mathrm{Ker}(G)=\mathrm{Span}(C^1, \dots, C^r)$. We note $C^i=(C_{i1},\dots,C_{iN})^T$.

\begin{theorem} \label{MAINT}
Let 
\begin{equation}\label{gen2}
\Omega'=N(\Omega,Z), \quad  Z'=-\nabla_Z E (\Omega,Z)\,,
\end{equation}
be the generalised gradient system written  for  the multilayer network \eqref{maps}  with the training set $(A^ i,p_i)$, $i=1,\dots,N$.
Then \eqref{gen2} admits $rl$ independent first integrals of the form
\begin{equation} \label{fint}
 I_{C^j,k}(\Omega)=\sum_{i=1}^N \, C_{ji}\, ln\left(\frac{ \Omega_{ik}}{ 1-\Omega_{ik}}\right), \quad \Omega_{ik}=\sigma(<A ^i,Y ^k>)\,.
\end{equation}

The cost function \eqref{costy1} $E=E(\Omega,Z)$  is a Lyapunov function for  \eqref{gen2}.
\end{theorem}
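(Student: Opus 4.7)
The plan is to prove the two claims of the theorem separately and independently, treating the first integrals first and then the Lyapunov property, in both cases reducing the question to an algebraic identity that exploits the kernel/positive-semidefiniteness of the Gram matrix $G=D^T D$.

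For the first integrals, I would differentiate $I_{C^j,k}(\Omega(t))$ along a solution of \eqref{NNN}. Using the elementary identity
\begin{equation}
\frac{d}{dt}\ln\!\left(\frac{\Omega_{pk}}{1-\Omega_{pk}}\right)=\frac{\Omega_{pk}'}{\Omega_{pk}(1-\Omega_{pk})},
\end{equation}
the prefactor $\Omega_{pk}(1-\Omega_{pk})$ in $N_{pk}$ is cancelled exactly, so that
\begin{equation}
\frac{d I_{C^j,k}}{dt}=\sum_{i=1}^N (p_i-\Phi_Z(\Omega^i))\,\Omega_{ik}(1-\Omega_{ik})\,\frac{\partial \Phi_Z}{\partial \pi_k}(\Omega^i)\,\left(\sum_{p=1}^N C_{jp}G_{ip}\right).
\end{equation}
The inner sum is $(GC^j)_i$, which vanishes since $C^j\in \mathrm{Ker}(G)$. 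This gives the first-integral property. For the independence count, I would observe that for each fixed $k$ the integral $I_{C^j,k}$ depends only on the $N$ variables $\Omega_{1k},\dots,\Omega_{Nk}$, and the variable blocks for distinct $k$ are disjoint. Thus integrals coming from distinct $k$'s are automatically functionally independent, while for fixed $k$ the map $C\mapsto I_{C,k}$ is linear and injective (by the strict monotonicity of $\ln(\Omega/(1-\Omega))$), so the $r$ linearly independent kernel vectors $C^1,\dots,C^r$ produce $r$ functionally independent integrals. Summing over $k$ yields $rl$.

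For the Lyapunov property I would split $dE/dt$ along \eqref{gen2} into the $Z$-contribution and the $\Omega$-contribution. The $Z$-contribution equals $\langle \nabla_Z E, Z'\rangle=-\|\nabla_Z E\|^2\le 0$, which is immediate. For the $\Omega$-contribution, I would use
\begin{equation}
\frac{\partial E}{\partial \Omega_{pk}}=-(p_p-\Phi_Z(\Omega^p))\,\frac{\partial \Phi_Z}{\partial \pi_k}(\Omega^p),
\end{equation}
and substitute the expression for $\Omega_{pk}'$ from \eqref{NNN}. Introducing the vectors
\begin{equation}
v_k=\bigl((p_i-\Phi_Z(\Omega^i))\,\Omega_{ik}(1-\Omega_{ik})\,\tfrac{\partial \Phi_Z}{\partial \pi_k}(\Omega^i)\bigr)_{i=1,\dots,N}\in\R^N,
\end{equation}
the double sum collapses to
\begin{equation}
\sum_{p,k}\frac{\partial E}{\partial \Omega_{pk}}\,\Omega_{pk}'=-\sum_{k=1}^{l} v_k^{T} G\, v_k \le 0,
\end{equation}
since $G=D^{T}D$ is positive semidefinite. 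Adding the two contributions gives $dE/dt\le 0$.

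The main obstacle, as the author hints, is not conceptual but organisational: carrying out the chain rule in the $\Omega$-term cleanly enough to recognise the quadratic form $v_k^{T}Gv_k$, because the indices $i$ (summation variable in $N_{pk}$), $p$ (summation variable in $\partial E/\partial\Omega$), and $k$ (hidden-layer neuron) must be swapped and regrouped carefully. Once this regrouping is made explicit, everything reduces to the two algebraic facts $GC^j=0$ and $G\succeq 0$ that the first-layer gradient structure makes available; the higher layers enter only through the scalar coefficients $p_i-\Phi_Z(\Omega^i)$ and $\partial\Phi_Z/\partial\pi_k$, which are simply carried along as weights and do not affect the sign.
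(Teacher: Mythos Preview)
Your proposal is correct and follows essentially the same approach as the paper: for the first integrals you make explicit the ``straightforward verification'' the paper alludes to (reducing to $(GC^j)_i=0$), and for the Lyapunov property your vectors $v_k$ are exactly the paper's $T_{ik}$, so your identity $\sum_{p,k}(\partial E/\partial\Omega_{pk})\Omega_{pk}'=-\sum_k v_k^T G v_k$ coincides with the paper's expression $-\sum_k\sum_i T_{ik}\sum_j G_{ij}T_{jk}$, and both conclude via positive semidefiniteness of $G$. Your independence argument is more detailed than the paper's (which simply asserts independence), but otherwise the two proofs are the same.
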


\begin{proof}
It is straightforward to verify that $ I_{C^j,k}$ are functionally  independent first integrals of \eqref{gen2}. Accordingly to \eqref{maps}, \eqref{ez} and \eqref{333}, the cost function \eqref{costy1}, written in variables $\Omega$, $Z$,  is given by 
\begin{equation} \label{ncost}
E(\Omega,Z)=\frac{1}{2}\sum_{i=1}^N\, ( p_i-\Phi_Z(\Omega^i))^2, \quad \Omega^i=(\Omega_{i1},\dots,\Omega_{il})^T\,,
\end{equation}
in view of   \eqref{t1},\eqref{ez} and \eqref{333}. Let $ t \mapsto ( \Omega(t), Z(t))$ be a solution of \eqref{gen2}. Then
\begin{equation} \label{derv}
\frac{d}{dt} E(\Omega(t),Z(t))= \left <   \frac{\partial E}{\partial \Omega}, N          \right >_{\Omega}- \left <   \frac{\partial E}{\partial Z},   \nabla_Z E       \right   >_{Z}=  \left <   \frac{\partial E}{\partial \Omega}, N          \right >_{\Omega}-||   \nabla_Z E   ||_Z^2\,,
\end{equation}
where $<, >_{\Omega}$, $<,>_Z$ are the standard scalar products defined respectively  in spaces $\R^a$  and $\R^b$ where $a=pl$ is the total number of splitting weights $\Omega_{pk}$ and $b$ is the total number  of weights $Z$ of the neural network \eqref{maps}.  One writes with help of \eqref{NNN}:
\begin{equation}
 \left <  \displaystyle   \frac{\partial E}{\partial \Omega}, N          \right >_{\Omega}= -\displaystyle\sum\limits _{i=1}^N \, \displaystyle\sum\limits_{k=1}^l\, \frac{\partial E}{\partial \Omega_{ik}}\, N_{ik}= 
 -\displaystyle \sum\limits_{k=1}^l \left (\displaystyle \sum\limits_{i=1}^N  T_{ik}  \displaystyle \sum\limits_{j=1}^N 
 G_{ij}T_{jk}\right) \,,
 \end{equation}
 where $T_{ik}= (p_i-\Phi_Z(\Omega^i))\displaystyle\frac{\partial \Phi_Z}{\partial \pi_k}(\Omega^i) \Omega_{ik}(1-\Omega_{ik})$. 
Since  $G_{ij}$ is a positive matrix, the last equality implies $ \left <  \frac{\partial E}{\partial \Omega}, N          \right >_{\Omega}\leq 0$. Together with \eqref{derv} this yields that $E$ is a Lyapunov function  of \eqref{gen2}.
\end{proof}

The observer system, defined by analogy with \eqref{observer} for  the generalised gradient system \eqref{gen2},  can be written in the following form
\begin{equation} \label{obs3}
\Omega'=N(\Omega,Z)+P(\Omega), \quad  Z'=-\nabla_Z E (\Omega,Z)\,,
\end{equation} 
where the vector field  $P$, called the  {\it dissipation} term, is defined by the first integrals \eqref{fint}  and given by the same formula \eqref{disp}.

The overfly algorithm for neural network training, already described in previous sections, can be easily adopted to the general multilayer case.
The only difference from the conventional backpropagation applied to the  network \eqref{maps},  consists in replacing the weights of the first layer $ Y_{ij}$ by the splitting weights  $\Omega_{pk} $, while keeping updating the weights $Z$ of other layers accordingly to the usual  bacpropagation algorithm.
At each iteration step, the evolution of parameters $\Omega_{pk}, Z$ is governed by the Euler discretisation of the observer system  \eqref{obs3}.

\section{Conclusion and numerical results}

In this section we compare the usual backpropagation and the overfly methods  for some particular  neural networks.  We start by  a simple  no hidden layer  case \eqref{2}.

We  put  $n=1$ and $X=x \in \R$.  Let $N=5$ and  the input input values  are defined by 
\begin{equation}\label{45}
T=[79/100, -9/20, 7/10, -9/50, -19/25]\,,
\end{equation}
with the  corresponding output  vector $p$:
\begin{equation} \label{46}
p=[-1/20, -21/25, -11/100, 61/100, -83/100]\,.
\end{equation}
The couple $(T,p)$  defines the training set \eqref{ty}.

Analysing the equation  $E'(x)=0$,  with $E$ defined in \eqref{cost},  one calculates,  with help of Maple's 10  RootFinding routine,  two local  minima $A$ and $B$ (see Figure1) of the cost function $E$  in points $x_{A}= 2.510$,   $E(x_{A})= 1.967$  and $x_{B}= 6.067$, $E(x_{B})= 1.966$ with $B$ being the global minimum of $E$.

The gradient system \eqref{3} was solved  using the Euler method \eqref{Euler} with $h=1$ with the initial point $x(0)=3$.
After $d=3000$ iterations one obtains  $x=x_{d}=2.510$ with $ E(x_{d})= 1.967$ and   the  backpropagation  network converges  to the local minimum $A$.

To calculate the vector $M(0)$, corresponding to $x(0)$, one can apply Lemma \ref{DIFF} to find
\begin{equation}
M(0)=[0.879, 0.244, 0.853, 0.389, 0.129]^T
\end{equation}

Now, following  the overfly approach,  we consider the observer system \eqref{observer} with $k=0.002$ and initial conditions $M(0)+\tilde{M}$ with the perturbation vector $\tilde{M}$ defined by 
 \begin{equation}
\tilde{M}=[0.01,0.01,0.01,0.01,0.01]^T\,.
\end{equation}
The Euler method, applied to    \eqref{observer}  with $h=1$  provides after $\delta=3000$ iterations   the value $x=\tilde{x}_{\delta}=6.085$ with $E(\tilde{x}_{\delta})=1.966$. 
Since,  $\tilde{x}_{\delta}$ is sufficiently  close to $x_B$ we conclude that the overfly network   converges  to the global minimum $B$  rather than to the local one $A$.   So,  the benefits of the overfly training  are immediately visible.

We have tested numerically  the overfly method for  a $4-2-1$  neural network \eqref{UY}.  It  has $4$ inputs and   $1$ hidden layer with $2$ neurones ($n=4,m=2$).
Both hidden and output layer have biases. 
The input data set has $N=10$ entries  arranged into the  following  $4\times10$ matrix $A=[A^1\,\dots,A^{10}]$ :
\begin{equation} \label{a1}
{\tiny A=\left[ \begin {array}{cccccccccc}  0.234&-
 0.316&- 0.746& 0.064&
 0.124& 0.894&- 0.786&-
 0.076& 1.044&- 0.436
\\ \noalign{\medskip}- 0.385&- 0.835&
 0.015& 0.365&- 0.935&
 0.135& 0.335& 0.505&
 0.495& 0.305\\ \noalign{\medskip}
 0.764& 0.594& 0.684&-
 0.946& 0.024&- 0.196&-
 0.596& 0.534&- 0.436&-
 0.426\\ \noalign{\medskip}- 1.014&-
 0.074& 0.346& 0.876&-
 0.354&- 0.184&- 0.174&-
 0.254& 0.266& 0.566\end {array}
 \right] }
\end{equation}

The columns of $A$ were chosen randomly and have zero mean. The output target vector $p \in \mathbb{R}^{10}$ is of the form
{\small \begin{equation} \label{a2}
 [0.301, 0.30001, 0.30002, 0.30013, 0.30004, 0.30005, 0.30006, 0.30007, 0.30008, 0.30009]\,,
\end{equation} 
and corresponds to  a  highly deviated data set. In particular: 
\begin{equation} 
\frac{ p_1-p_2}{ p_3-p_2} =99  \quad \mathrm{and} \quad  \frac{p_6-p_5}{p_5-p_4}=1\,.
\end{equation}

Firstly, the standard  $4-2-1$ neural network \eqref{UY}  was trained on the above data set  using usual backpropagation method (BM) with  randomly chosen  in the interval $[-1,1]$ weights $Y$ and $X$. The number of iterations was $d=1500$ with the  step size  $h=0.1$.

Then, the overfly algorithm was applied, as described in Section $3$,  with randomly chosen   initial splitting weights $\Omega_{ij}\in(0,1)$, same $X$  and the dissipation parameter $k=0.01$. 
The observer system $\eqref{aze}$  was solved by Euler method with the same  step size  $h=0.1$ and using the same iteration number  $d=1500$. 
At each iteration  we  computed  the cost function value for both methods:   using the formula   \eqref{costy} for  BM and the expression \eqref{newcost} for the  overfly method (OM).
The final cost value, after $d$ iterations for BM, was $E_{BM}=0.588  \cdot 10^{-3}$ and  for OM it was $E_{OM}=3.499 \cdot 10^{-7}$  with the ratio   $E_{BM}/E_{OM} \approx 146$. 
Thus   the overfly algorithm significantly outperforms the conventional backpropagation for this particular  problem.  The Figure  2 contains graphs of both cost functions in  the logarithmic scale.
We notice that our example is quite generic  one since our numerical   experiments show that  statistically OM gives more precise results than BM  for the large deviation output data sets.

We notice that there is an obvious resemblance between  conventional backpropagation and overfly approaches. Below we summarise briefly the principal steps of the proposed method. \\

\noindent {\it Step 1: Splitting.}    Assuming that the training data $(A^i,p_i)$ is given, firstly,  it is necessary to compute   the generating vectors of the null--space of the matrix $D=(A^1,\dots,A^N)$ i.e 
determine $\mathrm{Ker}(G)$. Secondly,   one introduces $Nl$ splitting weights \eqref{333}  to replace  $nl$ weights of neurones of the first layer. In practice, the number $N$ of training examples can be considerably larger than the input size of the network $n$, so  the splitting brings more additional parameters to be stored in the memory.\\
\noindent {\it  Step 2: Dissipation.  }    Using the vectors spanning $\mathrm{Ker}(G)$  one  creates a procedure computing the dissipation term $P$  defined by \eqref{zzz}.
The matrix inversion in \eqref{zzz} can be done, in the beginning,  using the  conjugate gradient   algorithm \cite{M} i.e in an iterative way.
Indeed, the matrix $R$ is symmetric and positive definite. \\
\noindent {\it Step 3:   Generalised gradient -- observer:} The first--order  Euler iterative method  is applied next to solve the observer system \eqref{obs3}. The optimal choice of the step $h$ and the constant $k$ depends on the concrete problem. We suggest to run firstly the usual backpropagation (i.e choosing the initial value $\Omega\in \Gamma_0$)  and  try to improve the result using several choices of initial values for $\Omega_{ ik }\in (0,1)$ and of $k>0$ in the overfly training.  If $k=0$ i.e then no dissipation term is present and starting with $\Omega\not \in \Gamma_0$ the method can provide only the approximation of the neural network weights.  But it is still worth trying: if initial values of $\Omega$ are sufficiently close to $\Gamma_0$ they will stay near $\Gamma_0$ (first integrals \eqref{fint}  are conserved)   and the algorithm's complexity is greatly reduced since
no dissipation is added at every iteration (no need to compute $P$ in \eqref{obs3} at every step).  Thus, the  neural network can be trained in alternation with dissipation switched on and  off.  We notice as well that  the proposed method
can be easily  adopted to take into account  biases by introducing additional  bias nodes.

Clearly, further research and more numerical evidences  are necessary to confirm the benefits of  the  overfly algorithm.  
The results of our study suggest a number of new avenues for research and numerical experiments.\\ 

\noindent {\bf Acknowledgments.} The study was supported by the  PEPS project Sigmapad, Intelligence Artificielle et Apprentissage Automatique.

\begin{figure}\label{fig1} \centering \includegraphics[scale=0.5] {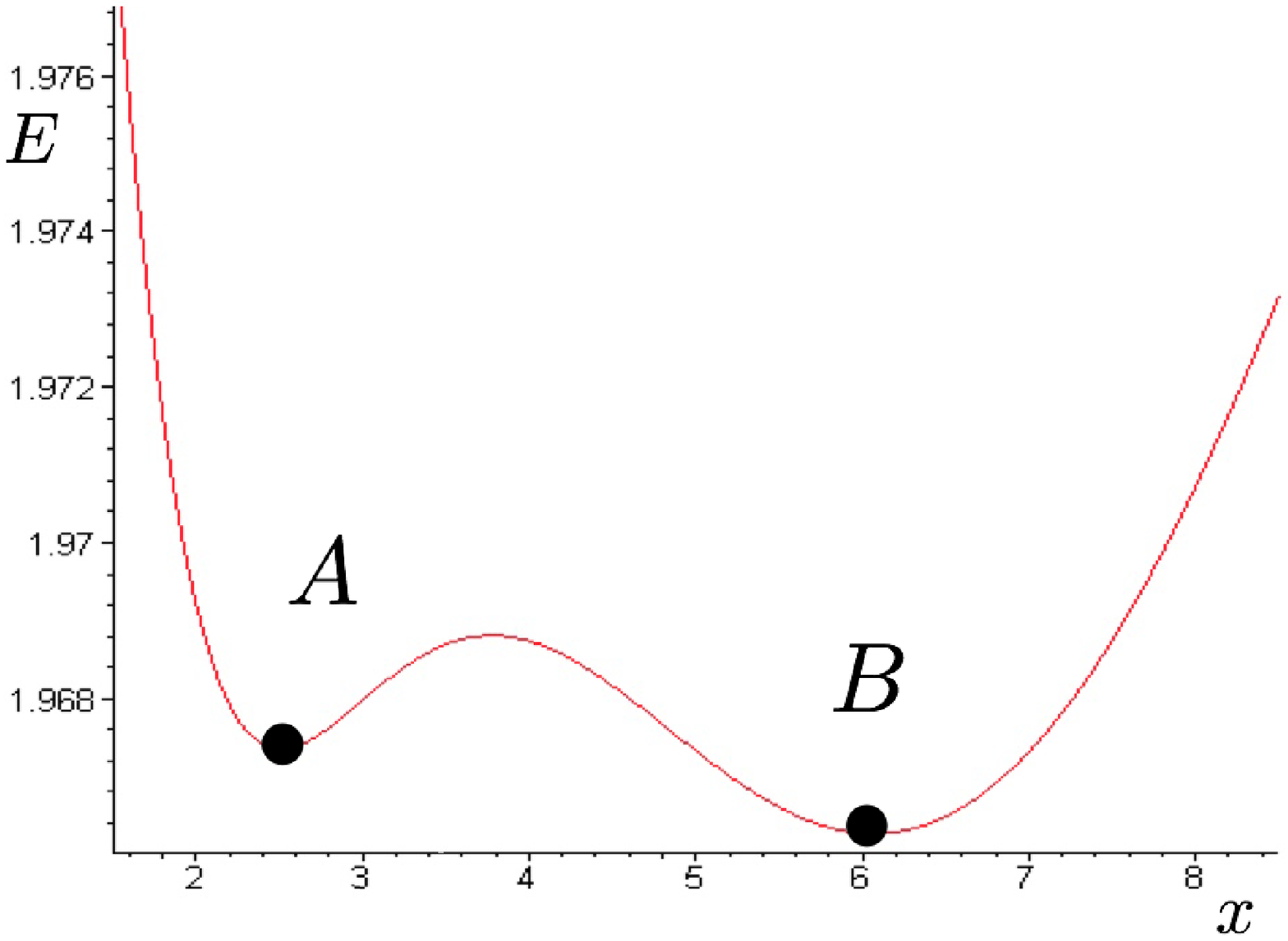} \caption{ The graph   of  the cost function $E$ for the training set \eqref{45}, \eqref{46} }
\end{figure}

\begin{figure}\label{fig33} \centering \includegraphics[scale=0.5] {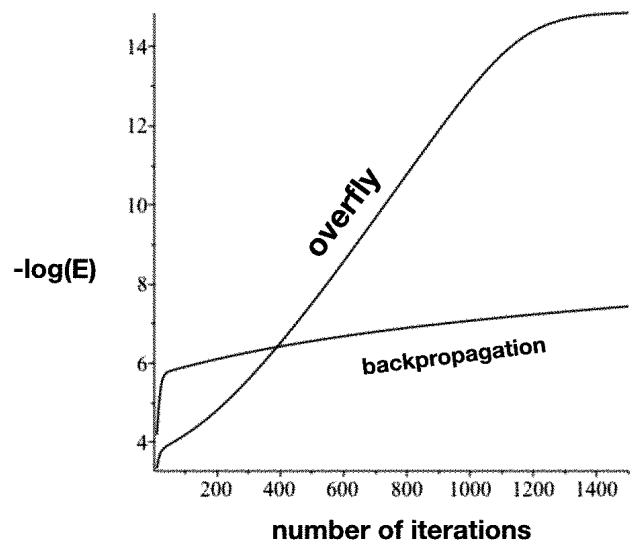} \caption{ $4-2-1$  neural network, testing performance of overfly and backpropagation for the data set  \eqref{a1}, \eqref{a2} }
\end{figure}

\begin{figure}\label{fig2} \centering \includegraphics[scale=0.5] {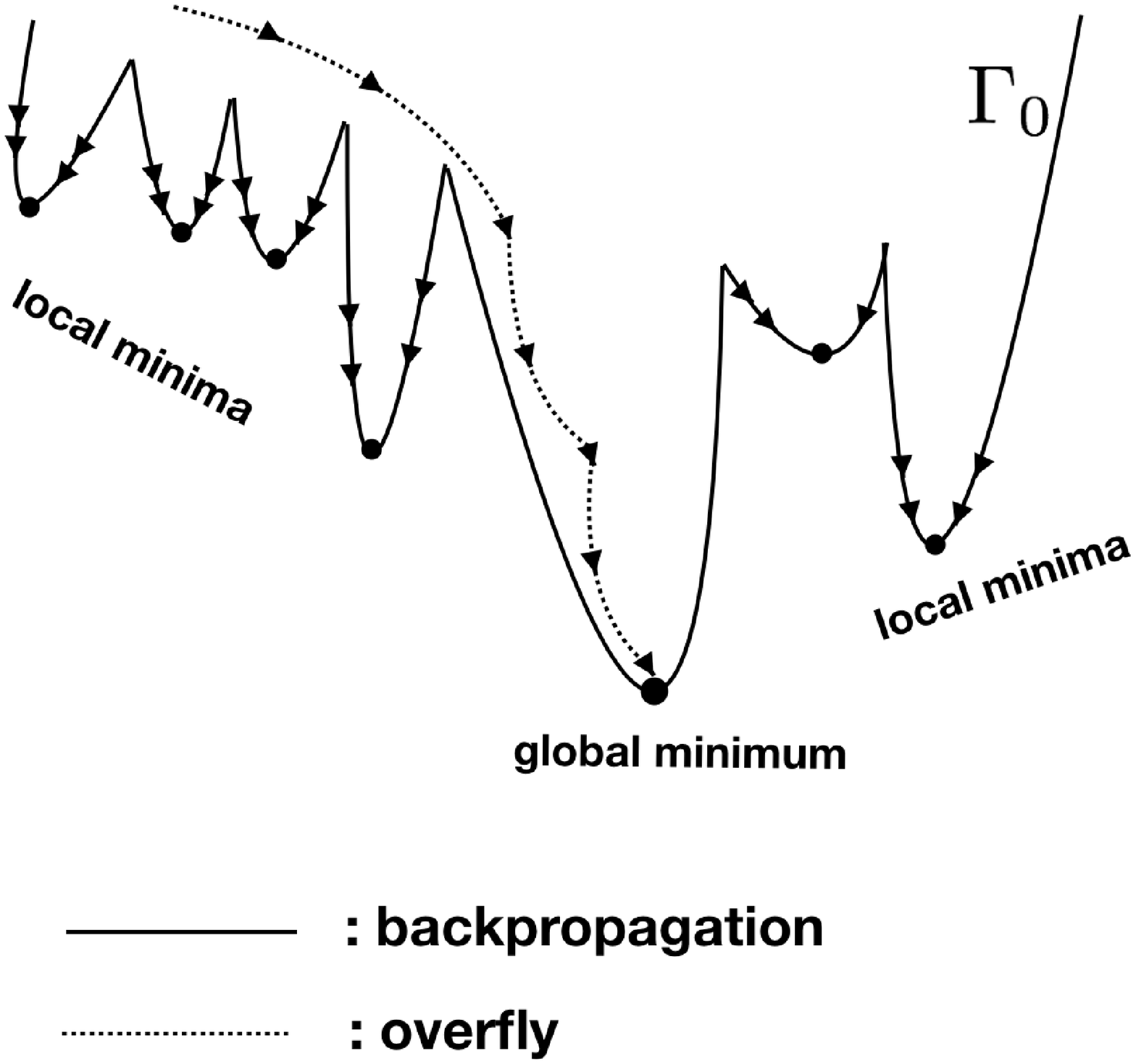} \caption{ }
\end{figure}

\end{document}